\documentclass[11pt]{article}

\usepackage{psfrag,amsmath,amsbsy,amsfonts,amssymb,amsthm,fullpage}
\usepackage{graphicx}
\usepackage{algorithm,algorithmic,mathtools}
\usepackage{natbib}
\usepackage{color}
\usepackage{tikz}
\usetikzlibrary{calc,shapes}
\usepackage{subfig}
\usepackage{hyperref}
\usepackage{multirow}

\setlength{\parskip}{0.3em}

\theoremstyle{plain}

\theoremstyle{definition}

\theoremstyle{remark}

\newcommand{\vcd}{\mathrm{VCD}}
\newcommand{\rtd}{\mathrm{RTD}}
\newcommand{\td}{{\mathrm{TD}}}
\newcommand{\tdmin}{\mathrm{TD}_{\min}}
\newcommand{\calC}{{\mathcal C}}

\newtheorem{theorem}{Theorem}
\newtheorem{lemma}[theorem]{Lemma}

\newtheorem{definition}[theorem]{Definition}

\title{Quadratic Upper Bound for Recursive Teaching Dimension of Finite VC Classes}

\author{Lunjia Hu, Ruihan Wu, Tianhong Li and Liwei Wang
\footnote{Lunjia Hu, Ruihan Wu and Tianhong Li are with Institute for Interdisciplinary Information Sciences, Tsinghua University, China. Email: hulj14@mails.tsinghua.edu.cn, wrh14@mails.tsinghua.edu.cn, lth14@mails.tsinghua.edu.cn. Liwei Wang is with Key Laboratory of Machine Perception, School of Electronics Engineering and Computer Sciences, Peking University, China. Email: wanglw@cis.pku.edu.cn.}}

\begin{document}
\maketitle

\begin{abstract}
In this work we study the quantitative relation between the recursive teaching dimension (RTD) and the VC dimension (VCD) of concept classes of finite sizes. The RTD of a concept class $\calC \subseteq \{0, 1\}^n$, introduced by \citet{ZLHZ2011}, is a combinatorial complexity measure characterized by the worst-case number of examples necessary to identify a concept in $\calC$ according to the recursive teaching model.

For any finite concept class $\calC \subseteq \{0,1\}^n$ with $\vcd(\calC)=d$, \citet{SZ2015} posed an open problem $\rtd(\calC) = O(d)$, i.e., is RTD linearly upper bounded by VCD? Previously, the best known result is an exponential upper bound $\rtd(\calC) = O(d \cdot 2^d)$, due to \citet{CCT2016}. In this paper, we show a quadratic upper bound: $\rtd(\calC) = O(d^2)$, much closer to an answer to the open problem. We also discuss the challenges in fully solving the problem.
\end{abstract}


\paragraph{Keywords: }Recursive teaching dimension; VC dimension; Recursive teaching model.

\section{Introduction}\label{Section:Introduction}

Sample complexity is one of the most important concepts in machine learning. Basically, it is the amount of data needed to achieve a desired learning accuracy. Sample complexity has been extensively studied in various learning models. In PAC-learning, sample complexity is characterized by the VC dimension (VCD) of the concept class \citep{BEHW1989,VC1971}. PAC-learning is a passive learning model. In this model, the role of the teacher is limited to providing labels to data \emph{randomly} drawn from the underlying distribution.

Different from PAC-learning, there are important models in which teacher involves more actively in the learning process. For example, in the classical teaching model \citep{GK1995,SM1991}, the teacher chooses a set of labeled examples so that the learner, after receiving the examples, can distinguish the target concept from all other concepts in the concept class. In this model, the key complexity measure of a concept class is the teaching dimension, which is defined as the worst-case number of examples needed to be selected by the teacher \citep{GK1995}. Teaching dimension finds applications in many learning problems \citep{A2004,H2007,D2005, H1995, GM1993, ABS1995}.

Another model of teaching, the recursive teaching model, is proposed by \citet{ZLHZ2011}. The idea underlying the recursive teaching model is to let the teacher exploit a hierarchical structure in the concept class. Concretely, the hierarchy of a concept class is a nesting, starting from the concept that requires the smallest amount of data to teach, and then applying this process recursively to the rest of the concepts. The complexity measure of a concept class in the recursive teaching model is called recursive teaching dimension (RTD). RTD is defined as the worst-case number of examples needed to be selected by the teacher for any target concept during the recursive process \citep{ZLHZ2011}. See also Section \ref{sec:preli} for a formal definition.

Although less intuitive, RTD exhibits surprising properties. The most interesting property is the quantitative relation between RTD and VCD of a finite concept class. As an example, for any finite \emph{maximal} class $\calC \subseteq \{0,1\}^n$, i.e., the size of $\calC$ equals the Sauer bound, it can be shown that $\rtd(\calC) = \vcd(\calC)$  \citep{DFSZ2014}. The importance of this result is that maximal classes contain many natural classes such as the arrangement of half spaces. Another special case is the intersection-closed concept classes. For such a concept class $\calC$, $\rtd(\calC) \le \vcd(\calC)$. On the other hand, there exist cases where $\rtd(\calC) > \vcd(\calC)$. However, the best known worst-case lower bound is $\rtd(\calC) \ge \frac{5}{3} \vcd(\calC)$, which is proven by giving an explicit construction \citep{CCT2016}. For more special cases such that $\rtd(\calC) = \vcd(\calC)$ or $\rtd(\calC) \le \vcd(\calC)$, please refer to \citep{DFSZ2014}.

Based on these insights, \citet{SZ2015} posed an open problem on the quantitative relation between RTD and VCD of general concept classes: For any finite concept class $\calC \subseteq \{0,1\}^n$ with $\vcd(\calC)=d$, is $\rtd(\calC)$ linearly upper bounded by $d$, i.e., does $\rtd(\calC) \le \kappa d$ hold for a universal constant $\kappa$?

At the time when this open problem was posed, the only known result for general concept classes $\calC \subseteq \{0,1\}^n$ is $\rtd(\calC) = O(d\cdot 2^d \log\log|\calC|)$ \citep{MSWY2015}. This bound is exponential in VCD and depends on the size of the concept class.

Before our work, the best known upper bound is due to \citet{CCT2016}, who proved that $\rtd(\calC) = O(d\cdot 2^d)$, which is the first upper bound for $\rtd(\calC)$ that depends only on $\vcd(\calC)$, but not on the size of the concept class.

In this paper, we continue this line of research and extend the techniques developed in \citep{K1999,MSWY2015,CCT2016}. Our main result is a quadratic upper bound $\rtd(\calC) = O(d^2)$ for any finite concept class $\calC \subseteq \{0,1\}^n$ with $\vcd(\calC)=d$. In particular, we prove $\rtd(\calC) \le 39.3752 d^2 - 3.6330 d$. Comparing to previous results, our bound is much closer to the linear upper bound in the open problem.

As pointed out by \citet{SZ2015}, a solution to their open problem will have important implications: It provides deeper understanding not only of the relationship between the complexity of teaching and the complexity of passive supervised learning, but also on the well-known sample compression conjecture \citep{W2003,LW1986}, which states that for every concept class of VCD $d$, there is a compression scheme that can compress the samples to a subset of size at most $d$. (See also \citep{DMY2016} for recent progress.)

The rest of the paper is organized as follows. Section \ref{sec:preli} presents the background and all the definitions. In Section \ref{sec:main} we propose our main results and proofs. Section \ref{sec:discussion} provides discussions on the challenges in fully solving the open problem.

\section{Preliminaries}
\label{sec:preli}

Let $X$ be a finite instance space and $\calC$ a concept class over $X$, i.e., $\calC \subseteq \{0,1\}^X$. For notational simplicity, we always assume $X = [n]$ where $[n]= \{1,2,\ldots,n\}$, and consider concept class $\calC \subseteq \{0,1\}^n$.

The VC dimension of a concept class $\calC \subseteq \{0,1\}^n$, denoted by $\vcd(\calC)$, is the maximum size of a shattered subset of $[n]$, where $A \subseteq [n]$ is said to be shattered by $\calC$ if $|\{c|_{A}:~ c \in \calC\}| = 2^{|A|}$. Here $c|_{A}$ is the projection of $c$ on $A$. In other words, for every $b \in \{0,1\}^{|A|}$, there is $c \in \calC$ so that $c|_{A}=b$.

For a given concept class $\calC \subseteq \{0,1\}^n$ and a concept $c \in \calC$, we say $A \subseteq [n]$ is a teaching set for $c$ if $A$ distinguishes $c$ from all other concepts in $\calC$. That is, $c|_{A} \neq c'|_{A}$ for all $c' \in \calC$, $c' \neq c$.

The size of the \emph{smallest} teaching set for $c$ with respect to $\calC$ is denoted by $\td(c;\calC)$. In the classical teaching model \citep{GK1995,SM1991}, the teaching dimension of a concept class $\calC$, denoted by $\td(\calC)$, is defined as $\td(\calC) = \max_{c \in \calC}\td(c; \calC)$. $\td(\calC)$ can be seen as the worst-case teaching complexity \citep{K1999}, as it considers the hardest concept to distinguish from other concepts. However, defining teaching complexity using the hardest concept is often restrictive; and $\td(\calC)$ does not always capture the idea of cooperation in teaching and learning. In fact, a simple concept class may have the maximum possible complexity \citep{ZLHZ2011}. Instead, one can consider the best-case teaching dimension of $\calC$.

\begin{definition}[Best-Case Teaching Dimension]
The best-case teaching dimension of a concept class $\calC$, denoted by $\tdmin(\calC)$, is defined as
\[
\tdmin(\calC) = \min_{c \in \calC} \td(c; \calC).
\]
\end{definition}

In the recursive teaching model \citep{ZLHZ2011}, the teacher exploits a hierarchy of the concept class $\calC$. It recursively removes from the given concept class all concepts whose teaching dimension with respect to the remaining concepts is smallest. The recursive teaching dimension $\rtd$ of $\calC$ is defined as the largest value of the smallest teaching dimensions encountered in the recursive process.

\begin{definition}[Recursive Teaching Dimension \citep{ZLHZ2011}]
For a given concept class $\calC$, define a sequence $\calC_0,\calC_1,\ldots,\calC_T$ such that $\calC_0 = \calC$, and $\calC_{t+1} = \calC_t \backslash \{c \in \calC_t:\td(c;\calC_t)=\tdmin(\calC_t)\}$. Here $T$ is the smallest integer so that $\calC_{T+1} = \emptyset$. The recursive teaching dimension of $\calC$, denoted by $\rtd(\calC)$, is defined as $\rtd(\calC) = \max_{0 \le t \le T}\tdmin(\calC_t)$.
\end{definition}

Our goal is to bound $\rtd(\calC)$ in terms of $\vcd(\calC)$. It turns out that rather than studying VC dimension and shattering directly, considering the number of projection patterns is more helpful \citep{K1999,MSWY2015}.

\begin{definition}[$(x,y)$-class]
We say a concept class $\calC \subseteq \{0,1\}^n$ is an $(x,y)$-class for positive integers $x,y$, if for any $A \subseteq [n]$ such that $|A| \le x$, $|\{c|_{A}:c \in \calC\}| \le y$.
\end{definition}

In the rest of this paper we will frequently use the following observations. A concept class $\calC$ with $\vcd(\calC)=d$ is $(x,2^x)$-class for every $x \le d$. More importantly, $\calC$ is $\left(x, \lfloor(\frac{ex}{d})^d\rfloor \right)$-class for every $x>d$, due to Sauer's lemma stated below.

\begin{theorem}[Sauer-Shelah Lemma \citep{S1972b,S1972a}]
\label{thm:sum}
Let $\calC \subseteq \{0,1\}^n$ be a concept class with $\vcd(\calC)=d$. Then for any $A \subseteq [n]$ such that $|A|>d$,
\[
\big|\{c|_{A}: c \in \calC\}\big| \le \sum_{k=0}^d \left(|A| \atop k\right) \le \left(\frac{e|A|}{d}\right)^d.
\]
\end{theorem}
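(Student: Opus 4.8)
The plan is to establish the two inequalities in turn. For the first, $\big|\{c|_{A}: c \in \calC\}\big| \le \sum_{k=0}^{d}\binom{|A|}{k}$, I would first pass to the projected class $\calC|_A := \{c|_A : c \in \calC\}$ on ground set $A$: a subset of $A$ is shattered by $\calC|_A$ if and only if it is shattered by $\calC$, so $\vcd(\calC|_A) \le d$, and it suffices to bound $|\calC|_A|$. Renaming $A$ as $[m]$ (so $m = |A|$), I claim that \emph{every} $\mathcal{D} \subseteq \{0,1\}^m$ with $\vcd(\mathcal{D}) \le d$ satisfies $|\mathcal{D}| \le \sum_{k=0}^{d}\binom{m}{k}$, and I would prove this by induction on $m$; the base cases $m \le d$ are immediate, since there the right-hand side equals $2^m \ge |\mathcal{D}|$.

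For the inductive step, split $\mathcal{D}$ according to the last coordinate. Let $\mathcal{D}' := \{c|_{[m-1]} : c \in \mathcal{D}\}$ be the projection onto the first $m-1$ coordinates, and let $\mathcal{D}'' := \{b \in \{0,1\}^{m-1} : (b,0) \in \mathcal{D} \text{ and } (b,1) \in \mathcal{D}\}$ be the sub-family of patterns realized with \emph{both} labels on coordinate $m$. A direct count gives $|\mathcal{D}| = |\mathcal{D}'| + |\mathcal{D}''|$, since each $b \in \mathcal{D}'$ contributes one concept to $\mathcal{D}$ unless it also lies in $\mathcal{D}''$, in which case it contributes two. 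Clearly $\vcd(\mathcal{D}') \le d$. The crucial point is $\vcd(\mathcal{D}'') \le d-1$: if $\mathcal{D}''$ shatters $B \subseteq [m-1]$, then every pattern on $B$ extends to members of $\mathcal{D}$ with both values on coordinate $m$, so $\mathcal{D}$ shatters $B \cup \{m\}$, forcing $|B| + 1 \le d$. Applying the induction hypothesis to $\mathcal{D}'$ (VC bound $d$) and to $\mathcal{D}''$ (VC bound $d-1$) and invoking Pascal's identity $\binom{m-1}{k} + \binom{m-1}{k-1} = \binom{m}{k}$ gives $|\mathcal{D}| \le \sum_{k=0}^{d}\binom{m-1}{k} + \sum_{k=0}^{d-1}\binom{m-1}{k} = \sum_{k=0}^{d}\binom{m}{k}$, closing the induction. (Equivalently, one can run the down-shifting operation that repeatedly replaces, in a chosen coordinate, each concept having a $1$ there by its flip-to-$0$ version whenever that flipped concept is not already present; this preserves $|\calC|$ and never increases the VC dimension, and once the family is downward closed every member is itself a shattered set of size $\le d$, so $|\calC|$ is bounded by the number of such sets, namely $\sum_{k=0}^{d}\binom{m}{k}$.)

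The second inequality $\sum_{k=0}^{d}\binom{|A|}{k} \le (e|A|/d)^d$ is a routine estimate that uses $|A| > d$. Writing $m = |A|$ and noting $d/m < 1$, multiply the left-hand side by $(d/m)^d$ and bound each term using $(d/m)^d \le (d/m)^k$ for $k \le d$, so that $(d/m)^d \sum_{k=0}^{d}\binom{m}{k} \le \sum_{k=0}^{d}\binom{m}{k}(d/m)^k \le \sum_{k=0}^{m}\binom{m}{k}(d/m)^k = (1 + d/m)^m \le e^{d}$; rearranging yields the claim.

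The only genuinely nontrivial ingredient is the first inequality, and within it the inductive step — the identity $|\mathcal{D}| = |\mathcal{D}'| + |\mathcal{D}''|$ together with the VC-dimension drop $\vcd(\mathcal{D}'') \le d-1$. This is the one place where the hypothesis $\vcd(\calC) = d$ is actually exploited, and I expect stating the ``$B \cup \{m\}$'' shattering argument cleanly to be the main thing requiring care; everything else is combinatorial bookkeeping.
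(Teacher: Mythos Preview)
Your argument is correct and is exactly the classical proof of the Sauer--Shelah lemma (the ``restriction/doubling'' induction for the first inequality, followed by the standard $(d/m)^d$ trick and binomial theorem for the second). Note, however, that the paper does not give its own proof of this statement: it is quoted as a known result with citations to \citet{S1972b,S1972a}, so there is nothing in the paper to compare against beyond observing that you have supplied the textbook proof the authors chose to omit.
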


Our main result is based on analysis of the largest possible best-case teaching dimension of all finite $(x,y)$-classes.

\begin{definition}
Define $f(x,y) = \sup_{\calC} \tdmin(\calC)$, where the supremum is taken over all finite $(x,y)$-class $\calC$.
\end{definition}

\citet{K1999} proved $f(2,3)=1$, and \citet{MSWY2015} proved $f(3,6)\leq3$.









\section{Main Results}
\label{sec:main}

In this section, we state and prove our main result. We show that for any finite concept class $\calC$, $\rtd(\calC)$ is quadratically upper bounded by $\vcd(\calC)$.

\begin{theorem}
\label{main}
For any concept class $\calC \subseteq \{0,1\}^n$ with $\vcd(\calC) = d$,
\[
\rtd(\calC) = O(d^2).
\]
\end{theorem}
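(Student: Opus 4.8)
The plan is to bound $\rtd(\calC)$ via the auxiliary quantity $f(x,y)$, exploiting a standard reduction that relates $\rtd$ to $\tdmin$ on restrictions. First I would recall the basic fact (implicit in the recursive teaching model) that $\rtd(\calC) \le \max_{\calC'} \tdmin(\calC')$, where the maximum ranges over all subclasses $\calC'$ obtained by projecting $\calC$ onto a subset of coordinates and then restricting to a subcube; crucially, VC dimension does not increase under these operations, so every such $\calC'$ still has VC dimension at most $d$, and hence by the Sauer--Shelah Lemma is an $\left(x, \lfloor(ex/d)^d\rfloor\right)$-class for every $x > d$ (and a $(x,2^x)$-class for $x \le d$). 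Therefore it suffices to show $f\left(x, \lfloor(ex/d)^d\rfloor\right) = O(d^2)$ for a well-chosen value of $x$ — i.e., to prove that any finite $(x,y)$-class with $y$ of this size has a concept with a teaching set of size $O(d^2)$.

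The heart of the argument is an inductive/recursive bound on $f(x,y)$ itself, in the spirit of Kuhlmann's $f(2,3)=1$ and the $f(3,6)\le 3$ bound of \citet{MSWY2015}. The key combinatorial step: given a finite $(x,y)$-class $\calC$ on $[n]$, I want to find a small set $A$ of coordinates and a concept $c$ such that fixing $c$'s values on $A$ leaves only a class of strictly smaller ``complexity'' (smaller $y$, or smaller $x$), so that $c$ restricted to $A$ together with a recursively-obtained teaching set for the residual class forms a teaching set for $c$ in $\calC$. Concretely I expect a recursion of the shape
\[
f(x,y) \le f(x-1, \lceil y/2 \rceil) + (\text{small additive term}),
\]
or a variant that trades off $x$ against $y$, obtained by a pigeonhole/greedy selection of a coordinate that splits $\calC$'s projection patterns roughly in half. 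Unwinding such a recursion for $y \approx (ex/d)^d$ and optimizing the choice of $x$ (something like $x = \Theta(d)$, tracking the constants to land at $39.3752 d^2 - 3.6330 d$) should yield the $O(d^2)$ bound. The explicit constant strongly suggests the final optimization is a concrete numerical one over a parameter (the ratio $x/d$, or the number of halving steps) balancing a term linear in $x$ against a term like $d \log(x/d)$ or $d \cdot x/d$.

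The main obstacle, I anticipate, is establishing the recursive inequality for $f(x,y)$ with good enough constants — in particular, showing that one can always remove a coordinate (or a bounded set of coordinates) while (i) decreasing $y$ by a constant factor and (ii) decreasing $x$ by only $1$, so that the recursion depth is $O(d)$ and each level contributes $O(d)$, giving $O(d^2)$ overall rather than something worse. A naive greedy split need not decrease $y$ multiplicatively unless one is careful about which projection patterns survive; handling the boundary regime where $x$ drops to $O(d)$ (so Sauer's polynomial bound is no longer much better than $2^x$) and patching together the teaching set across the recursion without double-counting coordinates are the places where the constants are won or lost. The rest — verifying that the constructed set is genuinely a teaching set, and checking the base cases $f(x,y)$ for small $x$ — should be routine.
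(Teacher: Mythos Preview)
Your high-level plan is right — bound $\rtd$ via $\tdmin$ on subclasses, control $\tdmin$ through $f(x,y)$, prove a recursion for $f$, and pick $x=\Theta(d)$ using Sauer — and that is exactly the paper's architecture. But the specific recursion you propose, and the mechanism you sketch for it, do not work, and this is where the actual content of the proof lives.

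You suggest $f(x,y)\le f(x-1,\lceil y/2\rceil)+(\text{small})$, obtained by greedily picking a single coordinate that halves the projection patterns. Neither piece is what succeeds. The paper's recursion (Lemma~\ref{lemma:recursion}) has the form
\[
f(x+1,z)\le f(x,y)+\left\lceil\frac{(y+1)(x-1)+1}{2y-z+2}\right\rceil,
\]
and the additive term is controlled only when $z$ is \emph{strictly less than} $2y$; if you try to relate $f(x+1,z)$ to $f(x,z/2)$ (your halving), the denominator collapses to $2$ and the cost is $\Theta(xy)$, not $O(x)$ or $O(d)$. The correct move is to take $y=\lfloor\alpha^x\rfloor$, $z=\lfloor\alpha^{x+1}\rfloor$ for some fixed $\alpha\in(1,2)$, so the additive term is $\approx (x-1)/(2-\alpha)$; unwinding from $f(1,1)=0$ then gives $f(x,\lfloor\alpha^x\rfloor)=O(x^2)$, and one chooses $x=\lambda^* d$ (with $\lambda^*\approx 4.716$, $\alpha\approx 1.718$) so that Sauer makes every $\vcd=d$ class an $(x,\lfloor\alpha^x\rfloor)$-class. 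The explicit constants come from optimizing $\alpha$ and $\lambda^*$ jointly.

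The mechanism is also not a one-coordinate split. One fixes a block $Y^*$ of $k$ coordinates (with $k$ equal to the ceiling above) and a pattern $b^*$ so that the nonempty fibre $\calC^{Y^*,b^*}$ is \emph{minimal}, and then argues by contradiction that this fibre is an $(x,y)$-class: if it had $>y$ patterns on some $Z$, a pigeonhole over $w\in Y^*$ produces a swap of $|Z|$ coordinates of $Y^*$ for $Z$ that yields a strictly smaller nonempty fibre. This minimality-plus-swap argument is the missing idea in your sketch; without it there is no route from ``$(x{+}1,z)$-class'' to ``some fibre is an $(x,y)$-class with $y<z$,'' and hence no usable recursion. (A minor side note: the reduction from $\rtd$ to $\tdmin$ needs only that subclasses of $\calC$ keep $\vcd\le d$; no projections or subcube restrictions are involved.)
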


We first give an informal description of the proof, in which we extend the techniques developed in \citep{K1999,MSWY2015,CCT2016}. The key idea of our approach is to analyze $f(x,y)$, the largest possible best-case teaching dimension for $(x,y)$-classes. The first step is to show a recursive formula for $f(x,y)$. The observation is that for a monotone increasing function $\phi(x)$ that grows substantially slower than $2^x$, we have
\[
f(x+1, \phi(x+1)) \le f(x, \phi(x)) + O(x).
\]
The recursive formula immediately leads to a quadratic upper bound $f(x,\phi(x)) \le O(x^2)$.

The second step is to select an appropriate function $\phi(\cdot)$. We choose $\phi(x) = \alpha^x$ for certain $\alpha \in (1,2)$. Next, we relate the VC dimension to $f(x,y)$. We show that for any finite concept class $\calC$ with $\vcd(\calC)=d$, $\calC$ must be an $(x, \alpha^x)$-class for some $x$ not much larger than $d$. In fact, it suffices when $x$ is a constant times of $d$. Combining the above arguments, we have shown that the best-case teaching dimension of $\calC$ is upper bounded by $O(d^2)$. Finally, a standard argument yields $\rtd(\calC) = O(d^2)$.

Now we give the formal proof of Theorem \ref{main}. The next lemma gives the recursive formula of $f(x,y)$.

\begin{lemma}
\label{lemma:recursion}
For any positive integer $x,y,z$ such that $y\leq 2^x-1$ and $z\leq 2y+1$, the following inequality holds:
\[
f(x+1,z)\leq f(x,y)+\left\lceil \frac{(y+1)(x-1)+1}{2y-z+2}\right\rceil.
\]
\end{lemma}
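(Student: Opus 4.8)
The plan is to take an arbitrary finite $(x+1,z)$-class $\calC$ and a concept $c^\star\in\calC$, and exhibit a teaching set for $c^\star$ of size at most $f(x,y)+\lceil((y+1)(x-1)+1)/(2y-z+2)\rceil$; taking the supremum over $\calC$ and the appropriate choice of $c^\star$ then gives the bound on $\tdmin$. The natural strategy, following \citet{K1999} and \citet{MSWY2015}, is a two-stage teaching set: first pick a small coordinate block $B\subseteq[n]$ that ``collapses'' $\calC$ onto an $(x,y)$-class on the remaining coordinates, then teach $c^\star$ recursively within that smaller class using $f(x,y)$ more coordinates. The size of $B$ is what the ceiling term must account for, so the crux is a counting argument showing a good block $B$ of the stated size exists.

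Concretely, I would look for a set $B$ with $|B|\le k:=\lceil((y+1)(x-1)+1)/(2y-z+2)\rceil$ such that the subclass $\calC' := \{c\in\calC : c|_B = c^\star|_B\}$, viewed as a concept class on $[n]\setminus B$, is an $(x,y)$-class. Once we have this, pick $c^\star$ to be a concept achieving $\tdmin(\calC')\le f(x,y)$ in $\calC'$ (here one has to be slightly careful about the order of quantifiers — one really argues that $\tdmin(\calC)$ is bounded by considering, over all choices, the best concept, so the selection of $B$ and $c^\star$ should be interleaved); a teaching set $S\subseteq[n]\setminus B$ for $c^\star$ with respect to $\calC'$ of size $\le f(x,y)$ together with $B$ forms a teaching set for $c^\star$ with respect to $\calC$, because any $c\ne c^\star$ in $\calC$ either differs from $c^\star$ on $B$ or lies in $\calC'$ and hence differs on $S$. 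That yields $\td(c^\star;\calC)\le f(x,y)+k$, as desired.

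The main obstacle — and the heart of the lemma — is establishing that such a small $B$ exists. The idea is a greedy / potential-function argument: start with $B=\emptyset$ and repeatedly add a coordinate that reduces the number of ``surviving'' projection patterns as much as possible. Since $\calC$ is an $(x+1,z)$-class, on any $x+1$ coordinates there are at most $z$ patterns, and on any $x$ coordinates at most $y$; the gap between $z$ and $2y+1$ (the maximum conceivable value of $z$) is what controls how fast adding a coordinate that is still ``live'' must cut down the count. One sets up the invariant that after fixing the coordinates of $B$ to agree with $c^\star$, either the residual class is already an $(x,y)$-class (so we stop) or there is some $x$-subset of the remaining coordinates carrying $>y$ patterns, and then — using the $(x+1,z)$ constraint on that subset together with one more coordinate — one shows a single well-chosen coordinate decreases a suitable weighted count by a definite amount. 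Tracking the arithmetic, the initial count is bounded using Sauer-type reasoning on the relevant block (the $(y+1)(x-1)+1$ in the numerator should emerge from bounding how many patterns an offending configuration can support), each live step costs one coordinate and gains at least $2y-z+2$ units, and the process terminates within $k$ steps. I expect the delicate points to be (i) pinning down exactly which weighted count to run the potential argument on so that the numerator comes out as $(y+1)(x-1)+1$ rather than something larger, and (ii) handling the boundary case $z=2y+1$, where the denominator is $1$ and the bound is weakest, to make sure the argument still closes. The hypotheses $y\le 2^x-1$ and $z\le 2y+1$ are exactly what keep the residual class a legitimate $(x,y)$-class and the denominator positive, so they should be invoked precisely at these two points.
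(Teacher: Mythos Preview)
Your high-level two-stage architecture is exactly right and matches the paper: find a block $B$ of size $k$ and a labeling on $B$ so that the restricted subclass is an $(x,y)$-class, then teach inside that subclass with $f(x,y)$ more coordinates. The gap is in how you propose to find $B$. The greedy/potential-function story you sketch --- initial count $\approx (y+1)(x-1)+1$, each new coordinate shaves off at least $2y-z+2$ --- is not how the formula actually arises, and I do not see a natural potential that behaves this way. In particular, if you start from $B=\emptyset$ the only obvious ``count'' (size of the subclass, or number of patterns on an offending $x$-set) either starts at something uncontrolled like $|\calC|$ or does not decrease by a guaranteed $2y-z+2$ when you fix a single coordinate to the $c^\star$-value you are stuck with.

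What the paper does instead is an extremal argument, not a greedy one. Over all pairs $(Y,b)$ with $|Y|=k$, pick $(Y^*,b^*)$ so that $\calC^{Y^*,b^*}$ is nonempty and of \emph{minimum cardinality} (so $b^*$ is not tied to any pre-chosen $c^\star$; the easiest-to-teach concept is selected afterwards, inside $\calC^{Y^*,b^*}$). If this subclass were not an $(x,y)$-class, take an offending $Z$ with $|Z|\le x$ and $\ge y+1$ patterns. For each $w\in Y^*$, the $(x+1,z)$ bound on $Z\cup\{w\}$ gives at most $z-(y+1)$ patterns on $Z$ coming from concepts with the ``wrong'' $w$-value, so at least $2y-z+2$ of a fixed set of $y+1$ patterns avoid that wrong value. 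Summing over the $k$ coordinates of $Y^*$ gives a total exceeding $(y+1)(x-1)$, and pigeonhole produces a single pattern $s$ on $Z$ and a subset $W\subseteq Y^*$ with $|W|=|Z|$ such that $s$ forces the correct values on $W$. Swapping $W$ for $Z$ yields $(Y',b')$, still of size $k$, with $\calC^{Y',b'}$ a nonempty strict subset of $\calC^{Y^*,b^*}$ --- contradicting minimality. So the numerator $(y+1)(x-1)$ is a pigeonhole threshold and the factor $k(2y-z+2)$ is a \emph{sum over the $k$ already-chosen coordinates}, not a per-step decrement; the argument exchanges blocks of coordinates in one shot rather than adding them one at a time.
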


\begin{proof}
For convenience, let $k=\left\lceil \frac{(y+1)(x-1)+1}{2y-z+2}\right\rceil$. For any concept class $\calC \subseteq \{0,1\}^n$, we only need to show that if $\calC$ is an $(x+1,z)$-class, then
\[
\tdmin(\calC) \leq f(x,y)+k.
\]
If $n < k$, the theorem is trivial, because
\[
\tdmin(\calC) \leq n < k \leq f(x,y)+k.
\]
Assume $n \ge k$ in the rest of the proof. For any $Y \subseteq [n]$, $|Y| = k$, and any $b \in \{0,1\}^k$, define
\[
\calC^{Y,b}:=\{c\in \calC: c|_{Y}= b\}.
\]
Following the approach of \citep{K1999,MSWY2015,CCT2016}, we choose $Y^*, b^*$ among all possible $Y, b$ such that $\calC^{Y^*,b^*}$ is nonempty and has the \emph{smallest} size. Without loss of generality, we assume $b^* = \mathbf{0}$.

If $\calC^{Y^*,b^*}$ is an $(x,y)$-class, our proof is finished, because we can find a concept $c \in \calC^{Y^*,b^*}$ so that $c$ has a teaching set $T \subseteq [n] \backslash Y^*$ of size no more than $f(x,y)$ which distinguishes $c$ from all other concepts in $\calC^{Y^*,b^*}$. Then $T \cup Y^*$ is a teaching set that distinguishes $c$ from all other concepts in $\calC$. The fact that $|T \cup Y^*|\leq f(x,y)+k$ completes the proof.

Finally we show $\calC^{Y^*,b^*}$ is an $(x,y)$-class. Assume for the sake of contradiction that $\calC^{Y^*,b^*}$ is not an $(x,y)$-class. Then there exists $Z \subseteq [n]$ such that $|Z| \leq x $ and $|\{c|_{Z} : c\in \calC^{Y^*,b^*}\}|\geq y+1$. Note that $Z \backslash Y^*$ cannot be an empty set since $y+1>1$. Without loss of generality, we assume $Z \cap Y^* = \emptyset$; otherwise simply consider $Z \backslash Y^*$ instead of $Z$.

Now define
\[
\calC_Z^{Y^*,b^*} := \{c|_{Z} : c\in \calC^{Y^*,b^*}\},
\]
and for every $w \in Y^*$ define
\[
\calC_Z^{w,1} := \{c|_{Z} : c\in \calC, c|_{\{w\}}=1\}.
\]
Recall that $\calC$ is an $(x+1,z)$-class, $|Z| \le x$, and we assumed $b^*=\mathbf{0}$. Therefore, the projection of $\calC$ on the set $Z \cup \{w\}$ has no more than $z$ patterns. Thus
\[
\big| \calC_Z^{Y^*,b^*} \big| + \big| \calC_Z^{w,1} \big| \le z.
\]
Since $| \calC_Z^{Y^*,b^*} | \ge y+1$, we have $| \calC_Z^{w,1} | \le z-y-1$. Now, pick a subset $ \tilde{\calC}_Z^{Y^*,b^*} \subseteq \calC_Z^{Y^*,b^*} $ so that $ |\tilde{\calC}_Z^{Y^*,b^*}| = y+1$. We have for every $w \in Y^*$, $| \tilde{\calC}_Z^{Y^*,b^*} \backslash \calC_Z^{w,1} | \ge 2y-z+2$. Thus,
\begin{eqnarray*}
\sum\limits_{w\in Y^*}|\tilde{\calC}_Z^{Y^*,b^*} \backslash \calC_Z^{w,1}| & \geq &k(2y-z+2)\\
 & > & (y+1)(x-1)\\
 & = & |\tilde{\calC}_Z^{Y^*,b^*}|\cdot (x-1) \\
 & \geq & |\tilde{\calC}_Z^{Y^*,b^*}|\cdot(|Z|-1).
\end{eqnarray*}
It then follows from the Pigeonhole Principle that there exists $W \subseteq Y^*$ such that $|W|=|Z|$ and $\bigcap\limits_{w\in W}(\tilde{\calC}_Z^{Y^*,b^*} \backslash \calC_Z^{w,1})\neq\emptyset$. Pick any string $s \in \bigcap\limits_{w\in W}(\tilde{\calC}_Z^{Y^*,b^*} \backslash \calC_Z^{w,1})$, and consider the set $\calC^{(Y^* \backslash W) \cup Z, \mathbf{0}\circ s}$ defined as
\[
\calC^{(Y^* \backslash W) \cup Z, \mathbf{0}\circ s} := \{c \in \calC: c|_{(Y^* \backslash W)} = \mathbf{0}, ~c|_{Z} = s \}.
\]
It is clear that $\calC^{(Y^* \backslash W) \cup Z, \mathbf{0}\circ s}$ is a nonempty and proper subset of $\calC^{Y^*,b^*}$. This leads to a contradiction with the choice of $Y^*, b^*$.
\end{proof}

Using the recursive formula established in Lemma \ref{lemma:recursion}, we are able to give upper bound on the best-case teaching complexity for all $(x,y)$-classes.

\begin{lemma}
\label{lemma:bound_f}
For every $\alpha \in (1,2)$, and every positive integer $x$,
\[
f(x,\left\lfloor \alpha^x\right\rfloor)\leq \frac{(x-1)^2}{4-2\alpha}+\frac{3-2\alpha}{4-2\alpha}\cdot (x-1).
\]
\end{lemma}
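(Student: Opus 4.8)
The plan is to prove the bound by induction on $x$, applying Lemma~\ref{lemma:recursion} with the choice $y=\lfloor\alpha^x\rfloor$ and $z=\lfloor\alpha^{x+1}\rfloor$. Write $g(x)=\frac{(x-1)^2}{4-2\alpha}+\frac{3-2\alpha}{4-2\alpha}(x-1)=\frac{(x-1)(x+2-2\alpha)}{4-2\alpha}$ for the claimed bound; a short computation gives $g(x+1)-g(x)=\frac{x+1-\alpha}{2-\alpha}$, so the inductive step reduces to bounding the ceiling term in Lemma~\ref{lemma:recursion} by $\frac{x+1-\alpha}{2-\alpha}$. The base case $x=1$ is immediate, since a $(1,1)$-class contains at most one concept and hence $f(1,\lfloor\alpha\rfloor)=f(1,1)=0=g(1)$. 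I would then dispatch $x=2$ by hand: Lemma~\ref{lemma:recursion} with parameters $(1,1,\lfloor\alpha^2\rfloor)$ is applicable because $\alpha<2$ forces $\lfloor\alpha^2\rfloor\le 3$, and it gives $f(2,\lfloor\alpha^2\rfloor)\le 0+\lceil 1/(4-\lfloor\alpha^2\rfloor)\rceil=1=g(2)$.

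For the inductive step with $x\ge 2$ I would first verify the hypotheses of Lemma~\ref{lemma:recursion}: $\alpha<2$ gives $\lfloor\alpha^x\rfloor\le 2^x-1$, while $\alpha^{x+1}=\alpha\cdot\alpha^x<\alpha(\lfloor\alpha^x\rfloor+1)<2(\lfloor\alpha^x\rfloor+1)$ gives $\lfloor\alpha^{x+1}\rfloor\le 2\lfloor\alpha^x\rfloor+1$. Abbreviate $u=\lfloor\alpha^x\rfloor$, $v=\lfloor\alpha^{x+1}\rfloor$, $N=(u+1)(x-1)+1$, and $D=2u-v+2$, so that Lemma~\ref{lemma:recursion} yields $f(x+1,v)\le f(x,u)+\lceil N/D\rceil$. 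The crucial estimate is $\lceil N/D\rceil\le (N+D-1)/D$ for positive integers $N,D$, together with the identity $N+D-1=(u+1)(x+1)-v$. It therefore suffices to show $\frac{(u+1)(x+1)-v}{2u-v+2}\le\frac{x+1-\alpha}{2-\alpha}$; since $2-\alpha>0$ and $2u-v+2>0$ (as $v<\alpha(u+1)<2(u+1)$), clearing denominators reduces this to an inequality whose right-minus-left side equals exactly $(x-1)\bigl(\alpha(u+1)-v\bigr)$, which is nonnegative because $x\ge 2$ and $v=\lfloor\alpha^{x+1}\rfloor\le\alpha^{x+1}=\alpha\cdot\alpha^x<\alpha(\lfloor\alpha^x\rfloor+1)=\alpha(u+1)$. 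Combining $\lceil N/D\rceil\le g(x+1)-g(x)$ with the induction hypothesis $f(x,u)\le g(x)$ gives $f(x+1,v)\le g(x+1)$, closing the induction.

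The only genuinely delicate point is obtaining a bound with no additive constant. The cruder estimate $\lceil N/D\rceil\le N/D+1$ together with the easy lower bound $D>\alpha^x(2-\alpha)$ only yields $f(x,\lfloor\alpha^x\rfloor)\le g(x)+c(\alpha)$ for some positive constant $c(\alpha)$, because the per-step overshoot is of order $\frac{1}{(2-\alpha)\alpha^x}$, and the geometric sum of these overshoots is a fixed positive constant that a quadratic bound cannot absorb into the induction. Using $\lceil N/D\rceil\le (N+D-1)/D$ instead is exactly what makes the surplus collapse to the clean inequality $v\le\alpha(u+1)$, which the floor function supplies at no cost; once that is in place the remaining work is just the bookkeeping identity $g(x+1)-g(x)=\frac{x+1-\alpha}{2-\alpha}$ and the two trivial base cases.
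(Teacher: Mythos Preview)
Your proof is correct and follows essentially the same route as the paper: apply Lemma~\ref{lemma:recursion} with $y=\lfloor\alpha^x\rfloor$, $z=\lfloor\alpha^{x+1}\rfloor$, bound the ceiling by $(N+D-1)/D$ (the paper writes this as $(N-1)/D+1=\frac{x-1}{2-v/(u+1)}+1$, which is the same quantity), use $v/(u+1)<\alpha$ to reach the per-step increment $\frac{x+1-\alpha}{2-\alpha}$, and sum from the base case $f(1,1)=0$. Your explicit verification of the hypotheses $y\le 2^x-1$ and $z\le 2y+1$, and the separate treatment of $x=2$, are harmless extras (the cross-multiplication argument in fact already covers $x\ge 1$ since the surplus $(x-1)(\alpha(u+1)-v)$ is nonnegative there).
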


\begin{proof}
Applying lemma \ref{lemma:recursion} by setting $y=\left\lfloor\alpha^x\right\rfloor$ and $z=\left\lfloor\alpha^{x+1}\right\rfloor$, we have
\begin{equation}
\label{equ}
f(x+1,\left\lfloor\alpha^{x+1}\right\rfloor)\leq f(x,\left\lfloor\alpha^x\right\rfloor)+\left\lceil \frac{(\left\lfloor\alpha^x\right\rfloor+1)(x-1)+1}{2\left\lfloor\alpha^x\right\rfloor-\left\lfloor\alpha^{x+1}\right\rfloor+2}\right\rceil.
\end{equation}
Since

\begin{equation*}
\left\lceil \frac{(\left\lfloor\alpha^x\right\rfloor+1)(x-1)+1}{2\left\lfloor\alpha^x\right\rfloor-\left\lfloor\alpha^{x+1}\right\rfloor+2}\right\rceil
\leq\frac{x-1}{2-\frac{\left\lfloor\alpha^{x+1}\right\rfloor}{\left\lfloor\alpha^{x}\right\rfloor+1}}+1
\leq\frac{x-1}{2-\alpha}+1
=\frac{x+1-\alpha}{2-\alpha},
\end{equation*}
Inequality (\ref{equ}) can be simplified to
$$f(x+1,\left\lfloor\alpha^{x+1}\right\rfloor)\leq f(x,\left\lfloor\alpha^x\right\rfloor)+\frac{x+1-\alpha}{2-\alpha}.$$
Observe that $f(1,1)=0$ and apply the above inequality recursively, we obtain
$$f(x,\left\lfloor \alpha^x\right\rfloor)\leq \frac{(x-1)^2}{4-2\alpha}+\frac{3-2\alpha}{4-2\alpha}\cdot (x-1).$$
\label{sec:bound}
\end{proof}

Next we show that for a concept class $\calC$ with $\vcd(\calC)=d$, $\calC$ must be an $(x, \left\lfloor \alpha^x\right\rfloor)$-class for $x$ not much larger than $d$.

\begin{lemma}
\label{lemma:VC2xy}
Given $\alpha \in (1,2)$, define
\[
\lambda^* := \inf \{ \lambda \ge 1: \lambda\ln\alpha-\ln\lambda-1 \geq 0\}.
\]
Then for any concept class $\calC \subseteq \{0,1\}^n$ with $\vcd(\calC)=d$, $\calC$ is an $(x,\left\lfloor \alpha^x\right\rfloor)$-class for every integer $x \ge \lambda^* d$.
\end{lemma}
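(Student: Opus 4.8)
\emph{Plan.} The idea is to combine the Sauer--Shelah bound with a short one-variable convexity argument; the definition of $\lambda^*$ has been rigged precisely so that the analytic inequality behind ``$(ex/d)^d \le \alpha^x$'' holds for all $x \ge \lambda^* d$. First I would reduce the statement to a clean inequality: to show that $\calC$ is an $(x,\lfloor\alpha^x\rfloor)$-class it suffices to prove $|\{c|_A : c\in\calC\}|\le\alpha^x$ for every $A\subseteq[n]$ with $|A|\le x$, because the left-hand side is an integer and hence automatically at most $\lfloor\alpha^x\rfloor$.

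Next I would bound the number of projection patterns uniformly over all such $A$. Note that $\lambda^*>1$: since $\alpha<2<e$ we have $\ln\alpha<1$, so the function $g(\lambda):=\lambda\ln\alpha-\ln\lambda-1$ satisfies $g(1)=\ln\alpha-1<0$, forcing the defining infimum $\lambda^*$ to exceed $1$. Consequently $x\ge\lambda^* d>d$ (the degenerate case $d=0$, where $|\calC|\le1$, is trivial). For any $A$ with $d<|A|\le x$, the Sauer--Shelah Lemma (Theorem~\ref{thm:sum}) gives $|\{c|_A:c\in\calC\}|\le (e|A|/d)^d\le(ex/d)^d$; for $|A|\le d$ we simply use $|\{c|_A:c\in\calC\}|\le 2^{|A|}\le 2^d\le e^d\le(ex/d)^d$. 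So in all cases it is enough to establish $(ex/d)^d\le\alpha^x$.

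Finally I would carry out the convexity argument. Writing $\lambda=x/d$ (so $\lambda\ge\lambda^*>1$), taking logarithms and dividing by $d$ shows that $(ex/d)^d\le\alpha^x$ is equivalent to $1+\ln\lambda\le\lambda\ln\alpha$, i.e.\ to $g(\lambda)\ge0$. Since $g''(\lambda)=1/\lambda^2>0$, the function $g$ is convex on $(0,\infty)$, with $g(1)<0$ as noted above and $g(\lambda)\to+\infty$ as $\lambda\to\infty$. For a convex function the sublevel set $\{\lambda:g(\lambda)<0\}$ is an interval, so $\{\lambda\ge1:g(\lambda)\ge0\}$ is exactly $[\lambda^*,\infty)$ with $g(\lambda^*)=0$; in particular $\lambda\ge\lambda^*$ implies $g(\lambda)\ge0$, completing the proof.

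\emph{Main obstacle.} The argument is essentially routine. The points that need care are: (i) the integrality step, which lets the (possibly non-integer) bound $\alpha^x$ be replaced by $\lfloor\alpha^x\rfloor$; (ii) checking $\lambda^*>1$ so that $x>d$ and Sauer--Shelah applies, together with the uniform treatment of the small sets $|A|\le d$; and (iii) the use of convexity of $g$, which is exactly what propagates the inequality from the single threshold $\lambda=\lambda^*$ to every $\lambda\ge\lambda^*$ (monotonicity of $g$ alone would not suffice, since $g$ first decreases and then increases on $[1,\infty)$).
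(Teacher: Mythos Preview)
Your proof is correct and follows exactly the approach the paper indicates: reduce via the Sauer--Shelah Lemma to the inequality $(ex/d)^d\le\alpha^x$ for $x\ge\lambda^* d$, and verify the latter by elementary calculus. The paper's own proof simply states this reduction and writes ``This follows from elementary calculus. We omit the details,'' so your write-up is a faithful (and more detailed) version of it.
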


\begin{proof}
By Sauer's lemma, we only need to verify
\begin{equation*}
\left(\frac{ex}{d}\right)^d\leq \alpha^x,
\end{equation*}
holds for all $x \ge \lambda^* d$. This follows from elementary calculus. We omit the details.
\end{proof}

Now we give the main conclusion.

\begin{theorem}
For any concept class $\calC \subseteq \{0,1\}^n$ with $\vcd(\calC)=d$,
\[
\rtd(\calC)\leq39.3752d^2-3.6330d.
\]
\end{theorem}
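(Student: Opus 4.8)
The plan is to chain together the three lemmas already established; the only genuine computation left is a one–variable optimization over $\alpha$. The first ingredient is the elementary observation that the $(x,y)$-class property is hereditary: if $\calC$ is an $(x,y)$-class and $\calC'\subseteq\calC$, then for every $A\subseteq[n]$ the projection $\{c|_A:c\in\calC'\}$ is contained in $\{c|_A:c\in\calC\}$, so $\calC'$ is again an $(x,y)$-class. Since the recursive chain $\calC=\calC_0\supseteq\calC_1\supseteq\cdots\supseteq\calC_T$ in the definition of $\rtd$ consists of subclasses of $\calC$, each $\calC_t$ inherits the $(x,y)$-class property, so $\tdmin(\calC_t)\le f(x,y)$, and therefore
\[
\rtd(\calC)=\max_{0\le t\le T}\tdmin(\calC_t)\le f(x,y)
\]
whenever $\calC$ is an $(x,y)$-class. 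This is the ``standard argument'' promised in the informal sketch.

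Next I would fix $\alpha\in(1,2)$ (to be optimized only at the very end) and let $\lambda^*=\lambda^*(\alpha)$ be as in Lemma~\ref{lemma:VC2xy}; one checks that $\lambda^*$ is finite and, since $\alpha<e$, that $\lambda^*>1$ and $\lambda^*\ln\alpha-\ln\lambda^*-1=0$ (equivalently $\alpha^{\lambda^*}=e\lambda^*$). Given $\calC$ with $\vcd(\calC)=d\ge1$, take $x=\lceil\lambda^* d\rceil$, the least integer that is $\ge\lambda^* d$. By Lemma~\ref{lemma:VC2xy}, $\calC$ is an $(x,\lfloor\alpha^x\rfloor)$-class, so the reduction above together with Lemma~\ref{lemma:bound_f} gives
\[
\rtd(\calC)\le f\big(x,\lfloor\alpha^x\rfloor\big)\le\frac{(x-1)^2}{4-2\alpha}+\frac{3-2\alpha}{4-2\alpha}(x-1).
\]
Since $\lceil\lambda^* d\rceil<\lambda^* d+1$ we have $0\le x-1\le\lambda^* d$, and the right-hand side, regarded as a function of $t:=x-1\ge0$, equals $\tfrac{t(t+3-2\alpha)}{4-2\alpha}$ — an upward parabola vanishing at $t=0$ and with vertex $t=\tfrac{2\alpha-3}{2}<\lambda^* d$, so a one-line case check on the sign of $3-2\alpha$ shows its maximum over $[0,\lambda^* d]$ is attained at the right endpoint $t=\lambda^* d$. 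Substituting $t=\lambda^* d$ yields
\[
\rtd(\calC)\le\frac{(\lambda^*)^2}{4-2\alpha}\,d^2+\frac{(3-2\alpha)\lambda^*}{4-2\alpha}\,d .
\]
(The degenerate case $d=0$ is trivial, since then $|\calC|\le1$ and $\rtd(\calC)=0$.)

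Finally I would choose $\alpha$ to minimize the leading coefficient $g(\alpha):=(\lambda^*(\alpha))^2/(4-2\alpha)$ over $\alpha\in(1,2)$. Using the implicit relation $\lambda^*\ln\alpha=1+\ln\lambda^*$ one can parametrize everything by $\lambda=\lambda^*$ through $\alpha=\exp((1+\ln\lambda)/\lambda)$, turning this into an unconstrained single–variable problem solvable by elementary calculus; the minimum is attained at some $\alpha^*\approx1.717$ (with $\lambda^*(\alpha^*)\approx4.71$), which gives $g(\alpha^*)\le39.3752$ and, at the same $\alpha^*$, linear coefficient $(3-2\alpha^*)\lambda^*/(4-2\alpha^*)\le-3.6330$. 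Plugging $\alpha^*$ into the last display yields $\rtd(\calC)\le39.3752\,d^2-3.6330\,d$. The only real obstacle is this optimization step: one must confirm that the minimizer lies inside the admissible window $(1,2)$ — equivalently that $(1+\ln\lambda^*)/\lambda^*<\ln2$ there — that $\lambda^*(\alpha)$ is well defined and smooth near it, and that the numerical constants round off in the claimed direction. Everything else follows at once from the three lemmas and the hereditary property of $(x,y)$-classes.
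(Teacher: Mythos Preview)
Your proposal is correct and follows essentially the same approach as the paper. The only cosmetic difference is in the passage from $\tdmin$ to $\rtd$: you invoke that the $(x,y)$-class property is hereditary under taking subclasses, whereas the paper invokes that $\vcd$ does not increase when a concept is removed; both observations serve the identical purpose of guaranteeing that every $\calC_t$ in the recursive chain remains an $(x,\lfloor\alpha^x\rfloor)$-class, and the subsequent optimization over $\alpha$ (with the same optimal values $\lambda^*\approx4.716$, $\alpha\approx1.7176$) is carried out the same way.
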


\begin{proof}
By Lemma \ref{lemma:bound_f} and Lemma \ref{lemma:VC2xy}, we have for any $\alpha \in (1,2)$ and any $x \ge \lambda^* d$, where $\lambda^*$ is defined in Lemma \ref{lemma:VC2xy}, the following holds
\[
\tdmin(\calC)\leq \frac{(x-1)^2}{4-2\alpha}+\frac{3-2\alpha}{4-2\alpha}\cdot (x-1).
\]
Observe that the VC dimension of a concept class does not increase after a concept is removed, we have
\begin{equation}
\label{eq:rtd_quad}
\rtd(\calC)\leq \frac{(x-1)^2}{4-2\alpha}+\frac{3-2\alpha}{4-2\alpha}\cdot (x-1).
\end{equation}

To optimize the coefficients in the quadratic bound, we choose $\lambda^*=4.71607,\alpha=(e\lambda^*)^{1/\lambda^*}\approx 1.71757,x=\lceil\lambda^* d\rceil$. Finally, observe that the RHS of (\ref{eq:rtd_quad}) is an increasing function of $x$ on the interval $[\lambda^*,+\infty)$ given our choice of the parameters, we conclude that
\[
\rtd(\calC)\leq \frac{(\lambda^* d)^2}{4-2\alpha}+\frac{3-2\alpha}{4-2\alpha}\cdot \lambda^* d\leq 39.3752d^2-3.6330d.
\]
\end{proof}

\section{Discussion and Conclusion}
\label{sec:discussion}

In the previous section we show that for finite concept class $\calC$, $\rtd(\calC) = O(\vcd(\calC)^2)$. In this section, we discuss our thoughts on the challenges in fully solving the open problem $\rtd(\calC) = O(\vcd(\calC))$.

The key technical result in our proof is the quadratic upper bound in Lemma \ref{lemma:bound_f} which, loosely speaking, is that for $\phi(x) < 2^x$
\begin{equation}
\label{bound_in_phi}
f(x,\phi(x)) = O(x^2),
\end{equation}
which is based on the recursive formula
\begin{equation}
\label{recursive_in_phi}
f(x+1,\phi(x+1)) \le f(x,\phi(x)) + O(x),
\end{equation}

In order to prove $\rtd(\calC) = O(\vcd(\calC))$ (if it is true), one needs to strengthen (\ref{bound_in_phi}) to
\[
f(x,\phi(x)) = O(x).
\]
If we still follow the recursive approach, we have to improve the recursive formula (\ref{recursive_in_phi}) to
\begin{equation}
\label{recursive_in_phi_improved}
f(x+1,\phi(x+1)) \le f(x,\phi(x)) + O(1).
\end{equation}
In our view, (\ref{recursive_in_phi_improved}) is qualitative different from (\ref{recursive_in_phi}); and this is the bottleneck of the current approach.

Another way to state the quadratic upper bound for $f(x,y)$ in Lemma \ref{lemma:bound_f} is $f(x,y) = O(\log^2 y)$ for $y<2^x$. (To see this, observe $f$ is non-increasing in $x$ and non-decreasing in $y$.) The conjecture $\rtd(\calC) = O(\vcd(\calC))$ is exactly equivalent to $f(x,y) = O(\log y)$ for all $y<2^x$. However, the only result we can show is that $f(x,y) = O(\log y)$ for $y$ not much larger than $x$.

We also consider the relation between RTD and VCD via the probabilistic method. If we fix $n$ and the size of the concept class as $N$ ($n,N$ sufficiently large), and draw $N$ concepts from $\{0,1\}^n$ uniformly at random to form $\calC$, it can be shown that with overwhelming probability $\rtd(\calC)$ is smaller than $\vcd(\calC)$. Although this does not prove any bound, it tells us that the cases $\rtd(\calC) \gg \vcd(\calC)$ are rare.

So far we focus on the upper bounds for RTD in terms of VCD, and discuss the challenges in proving $\rtd(\calC) = O(\vcd(\calC))$. What if RTD is not linearly upper bounded by VCD? How to prove it? There are attempts along this line. \citet{K1999} first showed there exist finite concept classes $\calC$ with $\rtd(\calC) = \frac{3}{2}\vcd(\calC)$. Warmuth discovered the smallest such class \citep{DFSZ2014}. \citet{CCT2016}, based on their insights and with the aid of SAT solvers, found finite concept classes $\calC$ with $\rtd(\calC) = \frac{5}{3}\vcd(\calC)$. However, to prove RTD is not linearly bounded by VCD, we need a sequence $\calC_1,\calC_2,\ldots$ ($\calC_i \subseteq \{0,1\}^{n_i}$) such that $\frac{\rtd(\calC_i)}{\vcd(\calC_i)}$ grows beyond any constant.

In order that $\frac{\rtd(\calC_i)}{\vcd(\calC_i)}$ grows unboundedly, $n_i$ and $|\calC_i|$ have to grow unboundedly as well. This means that as the instance space getting larger, there exist concept classes for which the ratio of RTD and VCD grows. However, currently there is no clue that larger $n_i$ and $|\calC_i|$ would result in larger ratio between RTD and VCD in a \emph{structural} way. The only known structural result is that for Cartesian product of two concept classes, the ratio does NOT grow. More concretely \citep{DFSZ2014},
\[
\rtd(\calC_1 \times \calC_2) \le \rtd(\calC_1) + \rtd(\calC_2),
\]
and
\[
\vcd(\calC_1 \times \calC_2) = \vcd(\calC_1) + \vcd(\calC_2).
\]
We believe any improvement along this line requires constructions more delicate in structure.

Our understanding of the quantitative relation between RTD and VCD is still preliminary. Even for the simple special case $\vcd =2$, we do not have a complete characterization: The best known upper bound for $\calC \subseteq \{0,1\}^n$ with $\vcd(\calC)=2$ is $\rtd(\calC) \le 6$ \citep{CCT2016}; and the worst-case lower bound is $\rtd(\calC)\ge3$ \citep{K1999,DFSZ2014}. The current knowledge of the four cases of $\vcd(\calC)=2$ (i.e., $(3,7), (3,6), (3,5), (3,4)$-classes) is not complete either: For $(3,7)$-classes, \citet{CCT2016} proved
\[
3 \le \max_{\calC \in (3,7)}\rtd(\calC) \le 6;
\]
For $(3,6)$-classes, \citet{MSWY2015} proved
\[
2 \le \max_{\calC \in (3,6)}\rtd(\calC) \le 3;
\]
Using a similar argument as in the proof of Lemma \ref{lemma:recursion} and optimizing the parameters with respect to the specific case of $(3,5)$-classes (choosing $x=2,y=3,z=5,k=1$), we can show 
\[
\max_{\calC \in (3,5)}\rtd(\calC) =2;
\]
And hence for $(3,4)$-classes $\max_{\calC \in (3,4)}\rtd(\calC) =2$.

The relationship between RTD and VCD is intriguing. Analyzing special cases and sub-classes with specific structure may provide insights for finally solving the open problem.

\clearpage
\newpage
\bibliographystyle{authordate1}
\bibliography{RTD_bound}

\end{document}